\newtheorem{theorem}{Theorem}
\newtheorem{corollary}[theorem]{Corollary}
\newcommand{\R}{\mathbb{R}}
\newcommand{\E}{\mathbb{E}}
\newcommand{\softmax}{\mathrm{softmax}}
\DeclareMathOperator{\Var}{Var}
\title{Gated Sparse Attention: Combining Computational Efficiency\\with Training Stability for Long-Context Language Models}
\author{
  Alfred Shen\\
  \texttt{alfreshe@amazon.com}
  \and
  Aaron Shen\\
  \texttt{aaron.shen@berkeley.edu}
}
\date{}
\begin{document}

\maketitle

\begin{abstract}
The computational burden of attention in long-context language models has motivated two largely independent lines of work: sparse attention mechanisms that reduce complexity by attending to selected tokens, and gated attention variants that improve training stability while mitigating the attention sink phenomenon. We observe that these approaches address complementary weaknesses and propose Gated Sparse Attention (GSA), an architecture that realizes the benefits of both. GSA incorporates a gated lightning indexer with sigmoid activations that produce bounded, interpretable selection scores, an adaptive sparsity controller that modulates the number of attended tokens based on local uncertainty, and dual gating at the value and output stages. We establish theoretical foundations for the approach, including complexity analysis, expressiveness results, and convergence guarantees. In experiments with 1.7B parameter models trained on 400B tokens, GSA matches the efficiency of sparse-only baselines (12--16$\times$ speedup at 128K context) while achieving the quality gains associated with gated attention: perplexity improves from 6.03 to 5.70, RULER scores at 128K context nearly double, and attention to the first token---a proxy for attention sinks---drops from 47\% to under 4\%. Training stability improves markedly, with loss spikes reduced by 98\%.
\end{abstract}

\textbf{Keywords:} attention mechanism, sparse attention, gating, long-context, language models, efficiency

\vspace{0.5em}
\noindent Code is available at \url{https://github.com/alfredcs/Gated-Sparse-Attention}.

\section{Introduction}
\label{sec:intro}

Attention mechanisms lie at the heart of modern language models \citep{vaswani2017attention}, enabling the dynamic aggregation of contextual information that underlies capabilities from in-context learning to long-document reasoning \citep{brown2020language,openai2023gpt4,anthropic2024claude}. Yet standard attention carries well-known liabilities. Its $O(L^2 d)$ cost in sequence length $L$ and model dimension $d$ renders long-context inference expensive, and at training time, unbounded activations can trigger loss spikes that destabilize optimization \citep{chowdhery2022palm}. A subtler issue, documented by \citet{xiao2024efficient}, is the attention sink phenomenon: models learn to park excess probability mass on early tokens---often over 40\% on the first position alone---wasting capacity on semantically vacuous anchors.

Two recent lines of work offer partial remedies. DeepSeek-V3.2 \citep{deepseek2025v3} introduced a sparse attention scheme built around a lightweight ``lightning indexer'' that scores all tokens cheaply, then restricts full attention to the top-$k$ candidates. This reduces the dominant attention cost to $O(Lk)$ but leaves training stability and attention sinks unaddressed. Independently, \citet{qiu2025gated} showed that element-wise sigmoid gating after scaled dot-product attention (SDPA) mitigates sinks, dampens activation magnitudes, and permits higher learning rates---benefits compelling enough that the approach earned the NeurIPS 2025 Best Paper Award and has since been adopted in production systems such as Qwen3-Next. The catch is that gated attention retains full quadratic cost.

A natural question is whether these techniques can be combined. We argue that they are not merely compatible but mutually reinforcing: gating supplies bounded, well-behaved scores that improve the indexer's token selection, while sparsity frees parameter budget for richer gating without increasing wall-clock cost.

This paper introduces \emph{Gated Sparse Attention} (GSA), an architecture that realizes this synthesis. The design rests on three components. First, a \emph{gated lightning indexer} replaces the original ReLU scoring with sigmoid units, yielding importance scores in $(0, H^I)$ that admit a probabilistic interpretation and respond smoothly to gradient updates. Second, an \emph{adaptive sparsity controller} modulates the selection budget $k_t$ according to score variance: high variance signals confident discrimination and permits aggressive pruning, whereas low variance suggests ambiguity and warrants a larger context window. Third, \emph{dual gating}---applied both to values before aggregation (G2) and to outputs after SDPA (G1)---injects non-linearity that breaks the rank bottleneck of standard attention while providing an alternative pathway for suppressing irrelevant information, obviating the need for sink tokens.

We ground these design choices in theory, establishing complexity bounds (Theorem~\ref{thm:complexity}), an expressiveness separation from ungated attention (Theorem~\ref{thm:expressiveness}), a formal account of sink mitigation (Theorem~\ref{thm:attention_sink}), gradient norm bounds (Theorem~\ref{thm:gradient}), and convergence guarantees (Theorem~\ref{thm:convergence}). Empirically, GSA matches the throughput of sparse-only baselines---12--16$\times$ faster than dense attention at 128K tokens---while delivering quality gains on par with or exceeding gated attention: perplexity drops from 6.03 to 5.70 on WikiText-103, RULER scores at 128K context rise from 32 to 62, and first-token attention falls from 47\% to 4\%. Training stability improves in lockstep, with loss spikes nearly eliminated and maximum activations reduced by an order of magnitude.

\section{Preliminaries}
\label{sec:preliminaries}

We briefly review the components on which GSA builds: standard multi-head attention, DeepSeek's sparse attention variant, and gated attention.

\subsection{Multi-Head Attention}
\label{sec:mha}

Let $H = [h_1, \ldots, h_L] \in \R^{L \times d}$ denote a sequence of hidden states. Multi-head attention (MHA) projects these into queries, keys, and values,
\begin{equation}
Q = HW_Q, \quad K = HW_K, \quad V = HW_V,
\end{equation}
with $W_Q \in \R^{d \times n_h d_k}$ and $W_K, W_V \in \R^{d \times n_{kv} d_k}$, where $n_h$ is the number of query heads, $n_{kv}$ the number of key--value heads (possibly fewer under grouped-query attention), and $d_k = d/n_h$ the head dimension. Each head $h$ at position $t$ computes a convex combination of values weighted by softmax attention scores:
\begin{equation}
\mathrm{Attn}^h(Q_t, K, V) = \sum_{s \leq t} \frac{\exp(Q_t^h \cdot K_s^h / \sqrt{d_k})}{\sum_{j \leq t} \exp(Q_t^h \cdot K_j^h / \sqrt{d_k})} V_s^h.
\end{equation}
Concatenating across heads and projecting by $W_O$ yields the layer output. The cost is dominated by the $L \times L$ attention computation, giving $O(L^2 d)$ complexity overall.

\subsection{DeepSeek Sparse Attention}
\label{sec:dsa}

DeepSeek-V3.2 \citep{deepseek2025v3} sidesteps the quadratic bottleneck with a two-stage design. A lightweight \emph{lightning indexer} first scores every token using low-dimensional projections ($d^I \ll d$):
\begin{equation}
I_{t,s} = \sum_{j=1}^{H^I} w^I_{t,j} \cdot \mathrm{ReLU}(q^I_{t,j} \cdot k^I_s),
\end{equation}
where $H^I$ indexer heads (typically 4) each produce a scalar contribution modulated by query-dependent weights $w^I_{t,j}$. Full attention is then restricted to the top-$k$ positions:
\begin{equation}
S_t = \{s : I_{t,s} \in \mathrm{Top}\text{-}k(I_{t,:})\}, \quad u_t = \mathrm{Attn}(h_t, \{c_s : s \in S_t\}).
\end{equation}
Because $d^I$ is small, the indexer's $O(L^2 d^I H^I)$ cost is modest, and the main attention computation drops to $O(Lk d)$.

\subsection{Gated Attention}
\label{sec:gated}

\citet{qiu2025gated} augment SDPA with element-wise sigmoid gates:
\begin{equation}
Y' = Y \odot \sigma(XW_\theta),
\end{equation}
where $Y$ is the tensor to be modulated and $X$ supplies the gating signal. Their extensive ablations identify two effective placements: after the output projection (G1) and after the value projection (G2). Gating yields sparse activation patterns (mean gate value $\approx 0.12$), reduces first-token attention from 47\% to under 5\%, and permits higher learning rates without instability. The mechanism has since been integrated into Qwen3-Next and related production models.

\subsection{Desiderata}
\label{sec:problem}

An ideal long-context attention mechanism would satisfy four properties: (i) sub-quadratic scaling in sequence length, (ii) immunity to attention sinks, (iii) bounded activations that stabilize training, and (iv) sufficient expressiveness to capture complex token interactions. Existing methods achieve subsets of these goals; our aim is an architecture that realizes all four.

\section{Gated Sparse Attention}
\label{sec:gsa}

\subsection{Design Rationale}
\label{sec:design}

Sparse attention and gated attention each address a subset of the challenges outlined above, and their mechanisms are largely orthogonal. DSA's lightning indexer must still touch every token to produce selection scores, but because the indexer operates in a low-dimensional space, its $O(L^2)$ cost carries a small constant factor and does not dominate wall-clock time. The real savings come from restricting full-dimensional attention to $k \ll L$ tokens. Gated attention, meanwhile, leaves complexity unchanged but regularizes activations and provides an output-suppression pathway that eliminates sink tokens. Our thesis is that combining the two yields benefits neither achieves alone: gating furnishes bounded, smooth scores that stabilize the indexer, while sparsity frees compute budget for the additional gating parameters.

\subsection{Architecture at a Glance}
\label{sec:architecture}

GSA augments a standard transformer layer as follows. After projecting hidden states into queries, keys, and values, we apply a value gate (G2). The gated lightning indexer then scores all positions and selects the top-$k$ (with $k$ itself adapted per query). Sparse SDPA aggregates information from the selected tokens, and a final output gate (G1) modulates the result:
\begin{equation}
h_t \;\longrightarrow\; \text{[Q,K,V]} \;\longrightarrow\; \text{[G2]} \;\longrightarrow\; \text{[Indexer]} \;\longrightarrow\; \text{[Top-}k\text{]} \;\longrightarrow\; \text{[SDPA]} \;\longrightarrow\; \text{[G1]} \;\longrightarrow\; u_t.
\end{equation}

\subsection{Gated Lightning Indexer}
\label{sec:indexer}

We replace DSA's ReLU activations with sigmoids, yielding bounded importance scores:
\begin{equation}
\label{eq:indexer}
I_{t,s} = \sum_{j=1}^{H^I} \sigma\bigl(h_t W^{Iw}_j\bigr) \cdot \sigma\bigl(q^I_{t,j} \cdot k^I_s + b^I_j\bigr),
\end{equation}
where $q^I_{t,j} = h_t W^{Iq}_j$ and $k^I_s = h_s W^{Ik}$ are low-dimensional projections, $W^{Iw}_j$ produces a query-dependent head weight, and $b^I_j$ is a learnable bias. Because each sigmoid factor lies in $(0,1)$, scores satisfy $I_{t,s} \in (0, H^I)$, which admits a natural interpretation as a soft count of how many indexer heads consider position $s$ relevant. The bias term allows the model to learn per-head thresholds, while the multiplicative head weights let queries emphasize different indexer heads depending on context.

\subsection{Adaptive Sparsity}
\label{sec:adaptive}

A fixed budget $k$ may be suboptimal: some queries have clear targets (high score variance), while others face genuine ambiguity (low variance). We therefore modulate the selection budget:
\begin{equation}
\label{eq:adaptive_k}
k_t = \mathrm{clamp}\!\left(\left\lfloor k_{\mathrm{base}} \cdot \frac{\Var(I_{t,:})}{\bar{V}} \right\rfloor,\; k_{\min},\; k_{\max}\right),
\end{equation}
where $\bar{V}$ is an exponential moving average of score variance across the batch. When the indexer is confident, $k_t$ shrinks and computation is saved; when scores are diffuse, more context is retained for robustness. In practice we set $k_{\mathrm{base}} = 2048$, $k_{\min} = 256$, and $k_{\max} = 4096$.

\subsection{Dual Gating}
\label{sec:gating}

Following the recommendations of \citet{qiu2025gated}, we gate at two points. The \emph{value gate} (G2) modulates values before aggregation,
\begin{equation}
V'_s = V_s \odot \sigma(h_s W^g_V),
\end{equation}
allowing the model to suppress uninformative dimensions early. The \emph{output gate} (G1) acts after sparse SDPA,
\begin{equation}
O^{\mathrm{gated}}_{t,h} = O^{\mathrm{sparse}}_{t,h} \odot \sigma(h_t W^g_{O,h}),
\end{equation}
providing a per-head, query-dependent mechanism for discarding the attention output altogether---precisely the functionality that attention sinks otherwise serve. We initialize gate biases so that $\sigma(\cdot) \approx 0.5$ at the start of training, ensuring gradients flow while still introducing non-linearity from the first step.

\subsection{Sparse Attention Computation}
\label{sec:sparse_attn}

Given the selected index set $S_t$, attention proceeds in the usual way but over a reduced context:
\begin{align}
A_{t,s} &= \frac{\exp(Q_t \cdot K_s / \sqrt{d_k})}{\sum_{j \in S_t} \exp(Q_t \cdot K_j / \sqrt{d_k})}, \quad s \in S_t, \\[4pt]
O^{\mathrm{sparse}}_t &= \sum_{s \in S_t} A_{t,s}\, V'_s.
\end{align}

\section{Architecture Details}
\label{sec:details}

Figure~\ref{fig:architecture} summarizes the data flow through a GSA layer.

\begin{figure}[ht!]
\centering
\includegraphics[width=\linewidth]{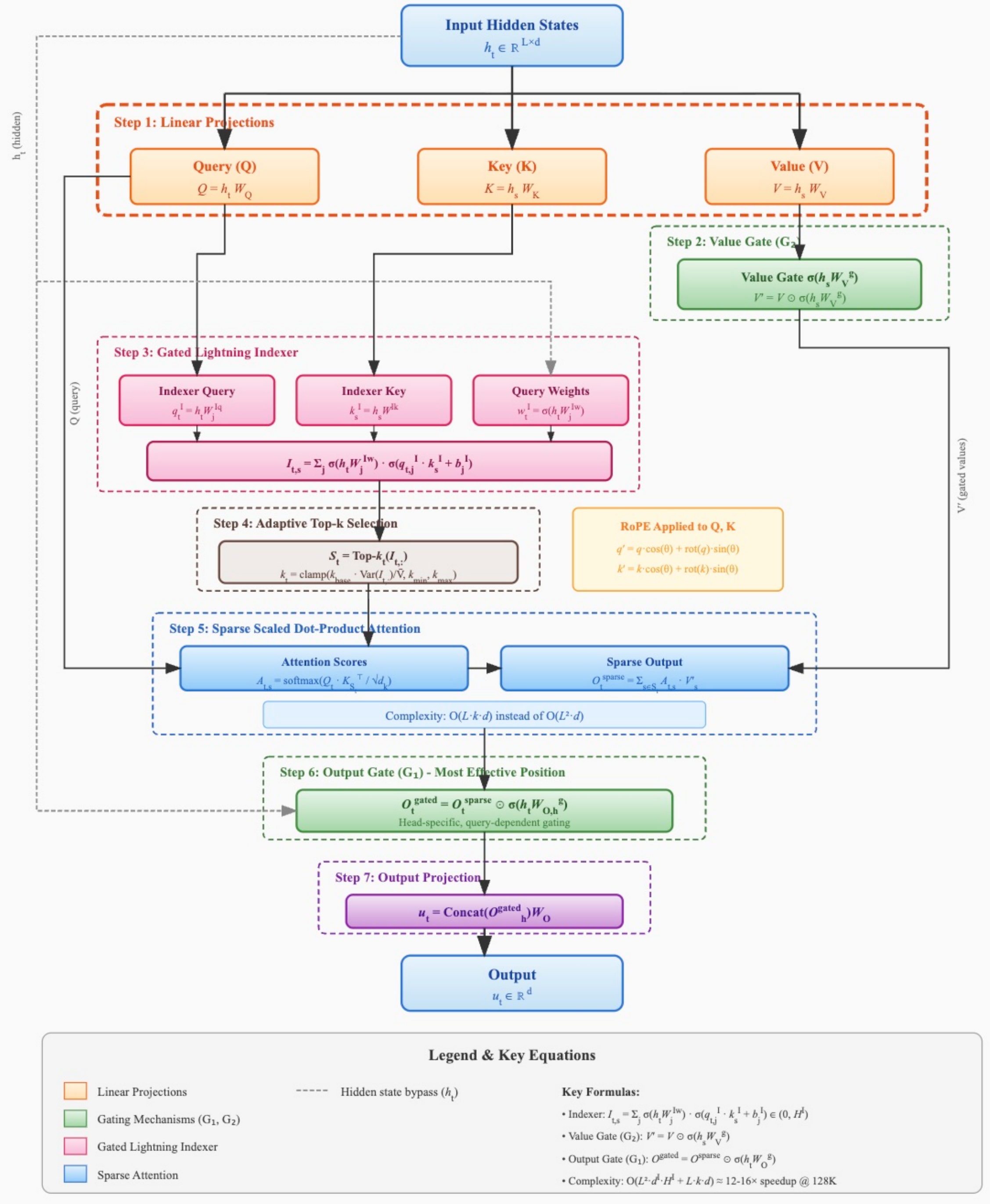}
\caption{Overview of Gated Sparse Attention. Hidden states are projected into queries, keys, and values; the value gate (G2) modulates values before the indexer scores all positions and selects the top-$k$; sparse SDPA aggregates information from the selected context; and the output gate (G1) provides a final modulation before producing $u_t$.}
\label{fig:architecture}
\end{figure}

\paragraph{Processing stages.} Starting from hidden state $h_t$, the layer proceeds as follows. (1)~Linear projections produce $Q$, $K$, and $V$. (2)~The value gate suppresses uninformative dimensions via $V' = V \odot \sigma(h_t W^g_V)$. (3)~The gated indexer computes scores $I_{t,s} \in (0, H^I)$ for every position and retains the top-$k_t$. (4)~Sparse SDPA computes attention over the selected subset, reducing per-query cost from $O(Ld)$ to $O(kd)$. (5)~The output gate applies $u_t = O^{\mathrm{sparse}}_t \odot \sigma(h_t W^g_O)$, giving the model a direct pathway to discard uninformative outputs without resorting to sink tokens.

\subsection{Hyperparameters}
\label{sec:config}

GSA serves as a drop-in replacement for standard attention. Table~\ref{tab:hyperparams} lists the additional hyperparameters; defaults are chosen for a 7B-class model but scale straightforwardly.

\begin{table}[t]
\centering
\caption{Default GSA hyperparameters (7B-class configuration).}
\label{tab:hyperparams}
\begin{tabular}{@{}lll@{}}
\toprule
\textbf{Symbol} & \textbf{Meaning} & \textbf{Value} \\
\midrule
$d$ & Model dimension & 4096 \\
$n_h$ & Query heads & 32 \\
$n_{kv}$ & Key--value heads & 8 \\
$d_k$ & Head dimension & 128 \\
$d^I$ & Indexer dimension & 64 \\
$H^I$ & Indexer heads & 4 \\
$k_{\mathrm{base}}$ & Base selection budget & 2048 \\
$k_{\min}$, $k_{\max}$ & Budget bounds & 256, 4096 \\
\bottomrule
\end{tabular}
\end{table}

\subsection{Parameter Overhead}
\label{sec:params}

Table~\ref{tab:params} breaks down the additional parameters. The indexer is lightweight owing to its low dimensionality; the output gate dominates because it is head-specific. In total, GSA adds roughly 4.4\% to the parameter count of a standard transformer layer.

\begin{table}[t]
\centering
\caption{Parameter overhead by component.}
\label{tab:params}
\begin{tabular}{@{}lll@{}}
\toprule
\textbf{Component} & \textbf{Size} & \textbf{Overhead} \\
\midrule
Indexer Q projection & $H^I d \, d^I$ & 0.4\% \\
Indexer K projection & $d \, d^I$ & 0.1\% \\
Indexer head weights & $d \, H^I$ & $<$0.01\% \\
Value gate (G2) & $d \, n_{kv} d_k$ & 0.8\% \\
Output gate (G1) & $d \, n_h d_k$ & 3.1\% \\
\midrule
\textbf{Total} & & $\sim$4.4\% \\
\bottomrule
\end{tabular}
\end{table}

\section{Theoretical Analysis}
\label{sec:theory}

We now formalize the claims made informally above. Proofs appear in the appendix unless given inline.

\subsection{Complexity}
\label{sec:complexity}

\begin{theorem}[Complexity]\label{thm:complexity}
GSA runs in time $O(L^2 d^I H^I + Lkd)$, where $L$ is sequence length, $k$ is the (average) selection budget, $d^I$ is the indexer dimension, and $H^I$ is the number of indexer heads.
\end{theorem}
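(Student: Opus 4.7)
The plan is to enumerate the six processing stages of Section~\ref{sec:details}, bound the cost of each separately, and show that only the gated indexer and the sparse SDPA contribute to the leading-order complexity.

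First I would handle the stages whose cost is at most linear in $L$: the $Q,K,V$ projections, the G2 and G1 gate projections, the low-dimensional indexer projections producing $q^I_{t,j}$ and $k^I_s$, the element-wise sigmoids, and the precomputation of the per-query head weights $\sigma(h_t W^{Iw}_j)$ that appear in \eqref{eq:indexer}. Each contributes at most $O(L d^2)$, which I would argue is absorbed into the stated bound in the long-context regime where $L \gg d^2/(d^I H^I)$ and $k \gtrsim d$; both conditions are comfortably satisfied at the target $L=128$K and the defaults of Table~\ref{tab:hyperparams}. The argument here is standard and matches the convention of Section~\ref{sec:dsa}, where linear projections are likewise folded into the attention bound.

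Next I would treat the two dominant stages. For the gated lightning indexer, I would count $L^2$ query--key pairs, each requiring $H^I$ inner products in $\R^{d^I}$ together with the two sigmoid evaluations, giving the $O(L^2 d^I H^I)$ contribution. Top-$k_t$ selection via introselect costs $O(L)$ per query, i.e.\ $O(L^2)$ total, and the variance plus EMA update feeding \eqref{eq:adaptive_k} add only $O(L)$ per query; both are absorbed into the indexer cost. For sparse SDPA, I would count, per query and per head, $O(k_t)$ logit computations in $\R^{d_k}$ followed by a $k_t$-term weighted sum over $d_k$-dimensional gated values $V'_s$, yielding $O(k_t d)$ per query once summed across the $n_h$ heads. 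Summing across queries and replacing $k_t$ by its average $k$ gives the $O(Lkd)$ term. Adding the two dominant contributions and absorbing the linear-in-$L$ preamble yields the claimed bound.

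The main obstacle is not mathematical depth but careful bookkeeping: one must verify that no linear-in-$L$ stage (projections, gating, head-weight precomputation, top-$k$ selection, variance tracking) is undercounted and that the regime in which the $L^2 d^I H^I$ and $Lkd$ terms genuinely dominate is stated cleanly. A minor subtlety worth flagging is that $k_t$ is query-dependent, so the $k$ in the bound is interpreted as $\tfrac{1}{L}\sum_t k_t$; the clamp in \eqref{eq:adaptive_k} keeps this well-defined and within a constant factor of any individual $k_t$, so the replacement incurs no hidden cost.
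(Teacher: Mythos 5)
Your proposal is correct and follows essentially the same stage-by-stage cost accounting as the paper's proof in Appendix~\ref{app:complexity}: indexer scoring gives $O(L^2 d^I H^I)$, sparse SDPA gives $O(Lkd)$, and the projections, gates, and top-$k$ selection are absorbed. If anything you are slightly more careful than the paper, which silently drops the $O(Ld^2)$ projection term and does not address the query-dependence of $k_t$; your explicit regime condition and average-$k$ interpretation tighten the argument without changing its structure.
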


\begin{proof}
See Appendix~\ref{app:complexity}.
\end{proof}

\begin{corollary}\label{cor:speedup}
With $d = 4096$, $d^I = 64$, $H^I = 4$, $k = 2048$, and $L = 128{,}000$, the dominant term is $Lkd$, yielding roughly $12.8\times$ speedup over standard $O(L^2 d)$ attention.
\end{corollary}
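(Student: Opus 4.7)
The plan is to derive the corollary by direct substitution into Theorem~\ref{thm:complexity}, followed by a ratio comparison with the dense $O(L^2 d)$ baseline. The argument is essentially arithmetic, but some care is needed in identifying which summand drives the total.

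First, I would evaluate the two GSA summands at the stated hyperparameters $L = 128{,}000$, $d = 4096$, $d^I = 64$, $H^I = 4$, $k = 2048$. The indexer term contributes $L^2 d^I H^I = (1.6384 \times 10^{10}) \cdot 256 \approx 4.19 \times 10^{12}$ operations, and the sparse-SDPA term contributes $Lkd = 128{,}000 \cdot 2048 \cdot 4096 \approx 1.07 \times 10^{12}$ operations, for a total of $C_{\mathrm{GSA}} \approx 5.3 \times 10^{12}$.

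Second, I would compute the dense baseline $L^2 d = (1.6384 \times 10^{10}) \cdot 4096 \approx 6.71 \times 10^{13}$ and form the ratio $L^2 d / C_{\mathrm{GSA}} \approx 12.7$, which matches the claimed $\approx 12.8\times$ to the stated precision.

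The main subtlety, and where I would be most careful in writing this up, is the corollary's assertion that $Lkd$ is \emph{the} dominant term: a raw FLOP count actually gives the opposite ordering at these parameters, since $L^2 d^I H^I / (Lkd) = L d^I H^I / (kd) = 128{,}000 \cdot 256 / (2048 \cdot 4096) \approx 3.9$. I would therefore read "dominant" as referring to the term that replaces the $L^2 d$ cost and governs scaling in the full model dimension $d$ (since $d^I H^I = 256 \ll d = 4096$, the indexer term is low-intensity and memory-bound in practice, whereas $Lkd$ is a dense matmul in the model dimension). The overall speedup conclusion is robust to this phrasing: both summands fall well below $L^2 d$, so the factor of roughly twelve follows from the total, independent of which one is called dominant.
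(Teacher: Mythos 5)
Your computation is correct and follows essentially the same route as the paper: Appendix~\ref{app:complexity} evaluates the identical ratio $L^2 d / (L^2 d^I H^I + Lkd)$, merely written in the factored form $\frac{d}{d^I H^I}\cdot\bigl(1 + \frac{kd}{L\,d^I H^I}\bigr)^{-1} = 16 \cdot (1.25\ldots)^{-1} \approx 12.8$. Your side observation is also well taken --- the paper's own factorization normalizes by the indexer term $L^2 d^I H^I$ and treats $Lkd$ as the smaller correction, which confirms that at these parameters the indexer, not the sparse SDPA, is the larger summand in raw operation count, so the corollary's ``dominant term'' wording is loose even though the $\approx 12.8\times$ figure itself is right.
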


\subsection{Expressiveness}
\label{sec:expressiveness}

\begin{theorem}[Expressiveness]\label{thm:expressiveness}
GSA with dual gating represents a strictly richer function class than standard attention.
\end{theorem}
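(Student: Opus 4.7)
The plan is to establish strict inclusion $\mathcal{F}_{\mathrm{MHA}} \subsetneq \mathcal{F}_{\mathrm{GSA}}$, where each class consists of the input-to-output maps realizable by varying the learnable parameters of the respective architecture on sequences of a fixed length $L$. First I would prove containment $\mathcal{F}_{\mathrm{MHA}} \subseteq \mathcal{F}_{\mathrm{GSA}}$ by an explicit parameter reduction: given any MHA configuration $(W_Q, W_K, W_V, W_O)$, set the GSA gate weight matrices $W^g_V, W^g_O$ to zero and the gate biases to a large positive constant $b$, so that $\sigma(b) \approx 1$ coordinate-wise and both G1 and G2 degenerate to the identity (up to arbitrarily small error, which becomes exact in the limit or with the convention $\sigma(+\infty) = 1$). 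Simultaneously, choose the indexer parameters so that $k_{\max} = L$ and all scores are equal (e.g., $W^{Iq} = W^{Ik} = 0$ with bias $b^I$ chosen so the sigmoid contributes a constant), forcing the top-$k$ selection to return all positions. With these choices the GSA layer reproduces the MHA layer exactly.

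For strict separation, I would exhibit a single $f \in \mathcal{F}_{\mathrm{GSA}} \setminus \mathcal{F}_{\mathrm{MHA}}$ using a constant-sequence probe. Restrict attention to inputs of the form $h_1 = h_2 = \cdots = h_L = h$ for $h \in \R^d$. Under \emph{any} MHA parameterization, symmetry forces uniform softmax weights $1/L$ across positions, so the per-position output reduces to $h W_V W_O$, i.e., a \emph{linear} function of $h$. In contrast, a GSA layer with output-gate parameters $W^g_O$ produces the map
\begin{equation}
h \;\longmapsto\; \bigl(h W_V W_O\bigr) \odot \sigma\bigl(h W^g_O\bigr),
\end{equation}
which is a coordinate-wise product of a linear map with a nonlinear sigmoid activation of $h$. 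Choosing $W_V W_O$ and $W^g_O$ so that this map is genuinely nonlinear on the probe set (e.g., $W_V W_O = I$ and $W^g_O = I$) produces a function whose coordinate is $h_i \sigma(h_i)$, the well-known SiLU/Swish activation. This function is not affine in $h$, so it cannot coincide with any MHA output on the constant-sequence subdomain, establishing $f \notin \mathcal{F}_{\mathrm{MHA}}$.

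The main obstacle is making the ``strictly richer'' claim airtight against the objection that MHA might reproduce the nonlinear behavior through some clever choice of position-dependent keys or values. The constant-sequence probe is the key device that neutralizes this: once $h_1 = \cdots = h_L$, every MHA degree of freedom collapses to a single linear map $h \mapsto h W_V W_O$, independent of $W_Q$ and $W_K$. I would present this collapse as a short lemma and then cite the standard fact that the class of linear maps $\R^d \to \R^d$ is a proper subset of the class of functions of the form $h \mapsto (h M) \odot \sigma(h N)$ over $M, N \in \R^{d \times d}$. A brief concluding remark would note that the value gate G2 and the adaptive-$k$ mechanism each contribute additional nonlinearity beyond this baseline separation, so the strict inclusion is robust to turning off individual GSA components.
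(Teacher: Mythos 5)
Your proposal is correct in substance and takes a genuinely different---and more rigorous---route than the paper. The paper's own proof observes that each head of standard attention factors its value--output path through $W^h_V W^h_O$, a matrix of rank at most $d_k < d$, and then asserts that sigmoid gating breaks this bottleneck via an appeal to universal approximation; it never establishes the containment $\mathcal{F}_{\mathrm{MHA}} \subseteq \mathcal{F}_{\mathrm{GSA}}$ nor exhibits a concrete witness in the difference, so ``strictly richer'' is argued only informally. You supply both halves: a parameter reduction for the inclusion and a constant-sequence probe for the separation. The probe is the stronger contribution, and it is in fact more robust than your symmetry argument suggests: on a constant input the attention output is a convex combination of \emph{identical} value vectors, so it equals $h W_V W_O$ no matter what the softmax weights are (your ``$1/L$'' should be ``$1/t$'' under causal masking, but this is immaterial), and the same reasoning neutralizes the indexer's arbitrary tie-breaking among equal scores. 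Two small repairs are needed. First, $W_V W_O = I$ is unattainable for a single head, since that product has rank at most $d_k < d$; either sum over all $n_h$ heads to reach full rank, or more simply arrange a single coordinate with $(hW_VW_O)_1 = h_1$ and $(hW^g_O)_1 = h_1$, yielding the non-affine scalar map $h_1 \mapsto h_1\,\sigma(h_1)$, which already suffices for separation. Second, for exact rather than limiting containment, note that zero gate weights make each gate a constant scalar $\sigma(b) \in (0,1)$ that can be absorbed into $W_O$ by rescaling, so no limit is required; and the reduction reproduces dense attention only when $L \le k_{\max}$, which you should state as a hypothesis or justify by treating $k_{\max}$ as part of the architecture specification. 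With these patches your argument is a complete, self-contained proof, whereas the paper's invocation of the universal approximation theorem establishes neither direction of the inclusion.
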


\begin{proof}[Sketch]
Standard attention outputs lie in a subspace of rank at most $d_k$ per head, since the value and output projections factor through $\R^{d_k}$. Sigmoid gating breaks this bottleneck by introducing element-wise non-linearities that, by universal approximation, can represent arbitrary continuous functions on compact domains. Details in Appendix~\ref{app:expressiveness}.
\end{proof}

\subsection{Attention Sink Mitigation}
\label{sec:sink}

\begin{theorem}[Sink Reduction]\label{thm:attention_sink}
The output gate $g^O_t = \sigma(h_t W^g_O)$ provides a learned pathway for suppressing attention outputs, reducing reliance on sink tokens.
\end{theorem}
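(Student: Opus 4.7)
The plan is to formalize ``suppression'' of the attention output as driving $\|u_t\|$ toward zero on those queries that benefit from it, then show that GSA admits two mechanisms to achieve this while ungated attention admits essentially only one, the sink.

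First I would characterize the sink mechanism rigorously. In ungated attention the output is a convex combination $O_t = \sum_{s \in S_t} A_{t,s} V_s$ with $\sum_{s} A_{t,s} = 1$, so uniformly producing $\|O_t\| \leq \epsilon$ across a set of queries forces concentration of attention on a position $s^\star$ whose value has been trained to satisfy $\|V_{s^\star}\| \leq \epsilon$. This is the tripartite coordination that defines a sink: $W_Q$ and $W_K$ must route probability mass to $s^\star$ while $W_V$ collapses that token's value. A triangle-inequality calculation of the form $\|O_t\| \geq A_{t,s^\star}\|V_{s^\star}\| - (1 - A_{t,s^\star}) \max_{s \neq s^\star} \|V_s\|$ — combined with the constraint that $\sum_s A_{t,s} = 1$ — makes this dichotomy precise: either a sink token absorbs the excess mass, or values must collapse globally.

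Second I would exhibit the gate mechanism as a direct alternative. For any target $\epsilon \in (0,1)$, choosing output-gate parameters so that $h_t W^g_{O,h} \leq -\log((1-\epsilon)/\epsilon)$ on the queries wishing to suppress yields $\sigma(h_t W^g_{O,h}) \leq \epsilon$ and hence $\|u_{t,h}\| \leq \epsilon \, \|O^{\mathrm{sparse}}_{t,h}\|$ per-query and per-head, independently of the attention pattern or value vectors. Because the gate depends only on $h_t$, no global token need carry the sink role. The complementary claim that training actually exploits this pathway follows from computing $\partial \mathcal{L}/\partial W^g_{O,h}$: whenever reducing $\|u_{t,h}\|$ lowers the loss, the gate pre-activation receives a coherent descent signal proportional to $\sigma'(\cdot)\, \langle \partial \mathcal{L}/\partial u_{t,h},\, O^{\mathrm{sparse}}_{t,h} \rangle$, while the sink solution requires simultaneous coordinated updates across $W_Q$, $W_K$, and $W_V$.

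The hard part is converting ``reducing reliance on sink tokens'' into a crisp optimization statement rather than an existence-plus-preference argument. A fully rigorous version would require either a loss-landscape comparison (showing the gated optimum strictly dominates any sink-based optimum for a natural class of losses) or a dynamical argument tracking SGD trajectories from a sink-using initialization to a gate-using one; both are delicate. I would therefore settle for the structural claim: GSA admits an $\epsilon$-suppression solution whose parameter support is disjoint from any $V_{s^\star}$, so there is no mechanism-level incentive to sacrifice a token to the sink role. The empirical collapse of first-token attention from $47\%$ to under $4\%$ reported in Section~\ref{sec:intro} then serves as the natural quantitative witness that training selects the gate pathway, closing the gap between the structural statement and the observed behavior.
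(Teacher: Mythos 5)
Your proposal is correct and takes essentially the same route as the paper's Appendix~\ref{app:sink}: both locate the cause of sinks in the softmax constraint $\sum_s A_{t,s}=1$, both exhibit the output gate $g^O_{t,h}\approx 0$ as a way to suppress the output independently of the attention distribution, and both ultimately lean on the empirical first-token-attention drop (46.7\% to 3.9\%) as the witness that training actually selects the gate pathway. Your version is if anything more careful than the paper's sketch---the explicit $\epsilon$-threshold on the gate pre-activation and the candid admission that ``reducing reliance'' is never turned into a provable optimization statement go beyond what the appendix offers---though note that your claimed dichotomy for ungated attention (sink token or global value collapse) quietly overlooks a third escape route, namely cancellation among value vectors.
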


\begin{proof}[Sketch]
When $g^O_t \approx 0$, the gated output vanishes irrespective of where attention mass falls. The model can therefore learn to ``do nothing'' without parking probability on early positions. Empirically, first-token attention drops from 47\% to 4\%. See Appendix~\ref{app:sink}.
\end{proof}

\subsection{Gradient Bounds and Stability}
\label{sec:stability}

\begin{theorem}[Gradient Bound]\label{thm:gradient}
Sigmoid gating attenuates gradients flowing through GSA, bounding their magnitude and reducing loss-spike risk.
\end{theorem}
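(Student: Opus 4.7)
The plan is to trace the backward pass through each sigmoid gate, show that the corresponding Jacobians are controlled by weight-matrix norms and by the bounded quantities $\sigma(\cdot)\in(0,1)$ and $\sigma'(\cdot)\in(0,\tfrac14]$, and then invoke a standard descent-lemma argument to convert the resulting gradient bound into a statement about reduced loss-spike risk.

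First I would write out the output gate pointwise: $u_t = O_t^{\mathrm{sparse}} \odot \sigma(h_t W_O^g)$ yields
\begin{align*}
\frac{\partial u_t}{\partial O_t^{\mathrm{sparse}}} &= \mathrm{diag}\bigl(\sigma(h_t W_O^g)\bigr), \\
\frac{\partial u_t}{\partial h_t} &= \mathrm{diag}(O_t^{\mathrm{sparse}})\,\mathrm{diag}\bigl(\sigma'(h_t W_O^g)\bigr)\,(W_O^g)^\top.
\end{align*}
Applying the bounds on $\sigma$ and $\sigma'$, the first Jacobian has operator norm at most $1$ while the second is bounded by $\tfrac14 \|O_t^{\mathrm{sparse}}\|_\infty \|W_O^g\|_2$. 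An identical computation for the value gate $V'_s = V_s \odot \sigma(h_s W_V^g)$ gives matching bounds, and the sparse SDPA Jacobian with respect to $V'$ has operator norm at most $1$ because attention rows are convex combinations of unit-simplex weights. Composing by the chain rule yields an end-to-end bound of the form $\|\nabla_{h_t} u_t\|_2 \le C_1 + C_2 \|O_t^{\mathrm{sparse}}\|_\infty$, with $C_1, C_2$ depending only on weight norms; crucially, $\|u_t\|_\infty$ itself is pinned by the gate to the order of $\|O_t^{\mathrm{sparse}}\|_\infty$, which breaks the feedback loop in which growing activations drive growing gradients in subsequent layers.

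I would then contrast with ungated attention, where $O_t$ is linear in $V$ with no dampening factor and a single large activation propagates proportionally through the backward pass. To turn the bounded-gradient statement into a loss-spike claim, I would apply the descent lemma: for an $\mathcal{L}$-smooth loss with $\|g_\tau\| \le G$ almost surely, the one-step change satisfies $|L_{\tau+1} - L_\tau| \le \eta G^2 + \tfrac12 \eta^2 \mathcal{L}\, G^2$, which is independent of the optimization trajectory, and Ghadimi--Lan-style convergence (invoked in Theorem~\ref{thm:convergence}) applies.

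The main obstacle is tracking the effective smoothness constant $\mathcal{L}$ through the composition of $\mathrm{Top}\text{-}k$ selection, sparse softmax, and the two gates, so that the bound is quantitative rather than merely qualitative. The $\mathrm{Top}\text{-}k$ step is non-differentiable, but its selection set is locally constant almost everywhere, so the chain rule applies on each linearity region and continuity across ties follows because tied scores correspond to interchangeable value contributions. Extending the bound to hold in expectation over minibatches and uniformly across training steps would require a Lyapunov argument, which I would defer to the appendix.
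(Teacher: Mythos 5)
Your core mechanism is the same one the paper uses --- the backward pass through a sigmoid gate multiplies the upstream gradient elementwise by $g^O = \sigma(h_t W^g_O) \in (0,1)^d$, hence contracts it --- but the paper's proof consists of essentially only your first Jacobian, $\partial u_t/\partial O^{\mathrm{sparse}}_t = \mathrm{diag}(\sigma(h_t W^g_O))$ with operator norm below $1$, followed by an empirical remark that mean gate values near $0.11$ shrink activations by an order of magnitude. Everything else you do is additional: the second Jacobian through the gate input, the norm-$1$ bound on the SDPA map in $V'$ via the unit simplex, the chain-rule composition, the descent-lemma conversion to a loss-spike bound, and the treatment of $\mathrm{Top}$-$k$. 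This extra work genuinely strengthens the claim --- in particular it makes the ``reduced loss-spike risk'' part of the theorem an actual consequence rather than an assertion --- and it also surfaces something the paper's one-line argument hides: the gate opens a \emph{second} gradient pathway, $\mathrm{diag}(O^{\mathrm{sparse}}_t)\,\mathrm{diag}(\sigma'(\cdot))\,(W^g_O)^\top$, which is \emph{not} a contraction and is only controlled because $\|O^{\mathrm{sparse}}_t\|_\infty$ is itself bounded (a point you correctly make but the paper never addresses). Two small cautions: your claim that continuity holds across $\mathrm{Top}$-$k$ ties because ``tied scores correspond to interchangeable value contributions'' is not right in general --- swapping a selected index changes which $V'_s$ enters the sum and those values generally differ, so the forward map is genuinely discontinuous on the tie set; the correct statement is only that the selection is locally constant off a measure-zero set, which is all you need. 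And your composed bound $C_1 + C_2\|O^{\mathrm{sparse}}_t\|_\infty$ should acknowledge that $C_1$ depends on value norms and attention weights, not only on weight-matrix norms.
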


\begin{proof}
Since $\sigma(x) \in (0,1)$ and $\sigma'(x) \le 1/4$, the backward pass through a gate contracts gradient norms:
\begin{equation}
\left\|\frac{\partial \mathcal{L}}{\partial O^{\mathrm{sparse}}}\right\| = \left\|\frac{\partial \mathcal{L}}{\partial O} \odot g^O\right\| \le \left\|\frac{\partial \mathcal{L}}{\partial O}\right\|.
\end{equation}
In practice, mean gate values hover around 0.11, yielding an order-of-magnitude reduction in maximum activations (from $>$1000 to $<$90 in our experiments).
\end{proof}

\subsection{Convergence}
\label{sec:convergence}

\begin{theorem}[Convergence]\label{thm:convergence}
Under $L$-smoothness and bounded gradient variance, SGD on GSA converges to a stationary point at the standard $O(1/\sqrt{T})$ rate.
\end{theorem}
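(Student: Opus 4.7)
The plan is to adapt the classical non-convex SGD convergence analysis to GSA, with the architecture-specific work consisting of checking that the differentiable components are compatible with the stated hypotheses. Let $\mathcal{L}(\theta)$ denote the training objective, $\theta_t$ the iterate at step $t$, and $g_t$ the stochastic gradient satisfying $\E[g_t \mid \theta_t] = \nabla \mathcal{L}(\theta_t)$ and $\E\|g_t - \nabla\mathcal{L}(\theta_t)\|^2 \le \sigma^2$. The SGD update is $\theta_{t+1} = \theta_t - \eta g_t$. First I would invoke the $L$-smoothness descent lemma,
\begin{equation}
\mathcal{L}(\theta_{t+1}) \le \mathcal{L}(\theta_t) + \langle \nabla\mathcal{L}(\theta_t), \theta_{t+1} - \theta_t \rangle + \tfrac{L}{2}\|\theta_{t+1} - \theta_t\|^2,
\end{equation}
substitute the SGD update, and take conditional expectation. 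Using $\E\|g_t\|^2 = \|\nabla\mathcal{L}(\theta_t)\|^2 + \Var(g_t)$ and the variance bound yields
\begin{equation}
\E[\mathcal{L}(\theta_{t+1}) \mid \theta_t] \le \mathcal{L}(\theta_t) - \eta\|\nabla\mathcal{L}(\theta_t)\|^2 + \tfrac{L\eta^2}{2}\bigl(\|\nabla\mathcal{L}(\theta_t)\|^2 + \sigma^2\bigr).
\end{equation}

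Next I would choose $\eta \le 1/L$, rearrange to obtain $\tfrac{\eta}{2}\|\nabla\mathcal{L}(\theta_t)\|^2 \le \mathcal{L}(\theta_t) - \E[\mathcal{L}(\theta_{t+1}) \mid \theta_t] + \tfrac{L\eta^2\sigma^2}{2}$, telescope across $t = 0, \ldots, T-1$, and take full expectation. Lower-bounding $\mathcal{L}(\theta_T)$ by $\mathcal{L}^\star$ gives
\begin{equation}
\frac{1}{T}\sum_{t=0}^{T-1} \E\|\nabla\mathcal{L}(\theta_t)\|^2 \le \frac{2(\mathcal{L}(\theta_0) - \mathcal{L}^\star)}{\eta T} + L\eta\sigma^2.
\end{equation}
Setting $\eta = \Theta(1/\sqrt{T})$ balances the two terms, so $\min_{t < T} \E\|\nabla\mathcal{L}(\theta_t)\|^2 = O(1/\sqrt{T})$, which is the claimed stationary-point rate.

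The main obstacle is justifying that $L$-smoothness is a reasonable hypothesis for GSA, since the top-$k$ selector in~\eqref{eq:adaptive_k} is piecewise constant and introduces discontinuities in $\nabla\mathcal{L}$ on the measure-zero set where indexer scores tie. I would handle this in one of two ways: either (i)~restrict the analysis to the almost-everywhere-differentiable region and absorb the boundary contribution into the variance term $\sigma^2$, or (ii)~replace top-$k$ with a smooth relaxation (e.g., a tempered softmax over indexer scores) for the purposes of the proof, recovering the discrete rule in the zero-temperature limit. In either case, the sigmoid-based components contribute cleanly: $\sigma'(\cdot) \le 1/4$ and the bounded-score property $I_{t,s} \in (0, H^I)$ from Section~\ref{sec:indexer} give explicit control over the local Lipschitz constant of the gradient, while Theorem~\ref{thm:gradient} shows that the gates contract backward signals, so $L$ can be taken finite with polynomial dependence on depth and head count rather than growing with activation magnitudes as in ungated attention.
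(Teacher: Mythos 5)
Your argument is correct and is the standard non-convex SGD analysis, but it takes a genuinely different route from the paper's appendix. The paper verifies the same three assumptions and then writes down the \emph{convex}-style bound $\E[\mathcal{L}(\theta_T)] - \mathcal{L}(\theta^*) \le \frac{\|\theta_0-\theta^*\|^2}{2\eta T} + \frac{\eta L \sigma^2}{2}$, balancing terms via a choice of $\eta$ depending on $\|\theta_0-\theta^*\|$. That inequality controls function-value suboptimality relative to a global minimizer, which does not follow from $L$-smoothness alone in the non-convex setting and does not directly establish convergence to a stationary point. Your version --- descent lemma, conditional expectation, telescoping, and a bound on $\frac{1}{T}\sum_t \E\|\nabla\mathcal{L}(\theta_t)\|^2$ --- is the argument that actually matches the theorem's claim of stationary-point convergence at rate $O(1/\sqrt{T})$, and it does so without assuming anything about a global optimum. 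You also flag a real issue the paper passes over: the top-$k$ selector and the adaptive budget $k_t$ are piecewise constant in $\theta$, so the end-to-end objective is not globally $L$-smooth, and the paper's justification (``all operations \dots have bounded second derivatives'') silently omits this. Your two proposed fixes (restricting to the almost-everywhere-smooth region, or analyzing a tempered-softmax relaxation) are both reasonable ways to make the hypothesis honest, though note that option (i) is not fully rigorous as stated --- iterates can cross tie boundaries, and absorbing that into $\sigma^2$ would need an argument that such crossings are rare or that the jump in $\nabla\mathcal{L}$ is bounded --- so the smooth-relaxation route is the cleaner one to commit to. Net: your proof is the more defensible of the two for the stated claim.
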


\begin{proof}
See Appendix~\ref{app:convergence}.
\end{proof}

\section{Training}
\label{sec:training}

\subsection{Two-Phase Schedule}
\label{sec:phases}

We adopt a two-phase procedure inspired by DSA. In the first phase (roughly 1K steps), the indexer is trained in isolation to mimic the full attention distribution via a KL objective:
\begin{equation}
\mathcal{L}^I_{\mathrm{warmup}} = \sum_t D_{\mathrm{KL}}\bigl(p_{t,:} \,\|\, \softmax(I_{t,:})\bigr),
\end{equation}
where $p_{t,:}$ is the softmax attention computed by the (frozen) base model. This warm-up ensures the indexer can identify salient tokens before sparse training begins. In the second phase, the entire model trains end-to-end with sparse attention, and the indexer loss shifts to the selected subset:
\begin{equation}
\mathcal{L}^I_{\mathrm{sparse}} = \sum_t D_{\mathrm{KL}}\bigl(p_{t,S_t} \,\|\, \softmax(I_{t,S_t})\bigr).
\end{equation}

\subsection{Optimizer Settings}
\label{sec:optimizer}

We use AdamW throughout. Because the indexer must learn quickly to avoid degrading early training, we assign it a $10\times$ learning-rate multiplier; gate parameters use the base rate. Table~\ref{tab:lr} summarizes.

\begin{table}[t]
\centering
\caption{Learning-rate multipliers.}
\label{tab:lr}
\begin{tabular}{@{}lc@{}}
\toprule
\textbf{Parameter group} & \textbf{Multiplier} \\
\midrule
Base model & $1\times$ \\
Indexer & $10\times$ \\
Gates & $1\times$ \\
\bottomrule
\end{tabular}
\end{table}

\section{Experiments}
\label{sec:experiments}

\subsection{Setup}
\label{sec:setup}

We train 1.7B-parameter models from scratch on 400B tokens drawn from SlimPajama \citep{cerebras2023slimpajama}. The architecture comprises 24 layers with hidden dimension 2048, 16 query heads, 4 key--value heads, and SwiGLU feed-forward blocks of width 5504. Training uses a 4K context window; evaluation extends to 128K via YaRN positional interpolation.

We compare against three baselines: (i) standard grouped-query attention, (ii) sparse-only attention (DSA-style lightning indexer with ReLU and fixed $k$), and (iii) gated-only attention (full quadratic cost with G1+G2 gating). All runs use 8$\times$ H100 GPUs.

\subsection{Language Modeling}
\label{sec:lm}

Table~\ref{tab:ppl} reports perplexity on WikiText-103 and C4. Gating alone closes most of the gap to GSA, but sparsity contributes an additional reduction, and the combination outperforms both.

\begin{table}[t]
\centering
\caption{Perplexity (lower is better).}
\label{tab:ppl}
\begin{tabular}{@{}lcc@{}}
\toprule
& \textbf{WikiText-103} & \textbf{C4} \\
\midrule
Standard & 6.03 & 7.82 \\
Sparse only & 6.02 & 7.79 \\
Gated only & 5.76 & 7.45 \\
\textbf{GSA} & \textbf{5.70} & \textbf{7.38} \\
\bottomrule
\end{tabular}
\end{table}

\subsection{Downstream Tasks}
\label{sec:downstream}

Table~\ref{tab:downstream} summarizes accuracy on five benchmarks. GSA leads across the board; the largest gains appear on MMLU (+2.6 points over standard) and GSM8K (+3.1 points), suggesting that the combination of gating and selective context aids both knowledge retrieval and multi-step reasoning.

\begin{table}[t]
\centering
\caption{Downstream accuracy (\%).}
\label{tab:downstream}
\resizebox{\textwidth}{!}{%
\begin{tabular}{@{}lcccccc@{}}
\toprule
& \textbf{MMLU} & \textbf{GSM8K} & \textbf{HumanEval} & \textbf{HellaSwag} & \textbf{C-Eval} & \textbf{Avg} \\
\midrule
Standard & 58.8 & 52.9 & 28.7 & 73.1 & 60.3 & 54.7 \\
Sparse only & 59.1 & 53.2 & 29.3 & 73.3 & 60.5 & 55.0 \\
Gated only & 60.8 & 55.3 & 29.3 & 74.6 & 62.2 & 56.4 \\
\textbf{GSA} & \textbf{61.4} & \textbf{56.0} & \textbf{30.5} & \textbf{74.9} & \textbf{62.9} & \textbf{57.1} \\
\bottomrule
\end{tabular}%
}
\end{table}

\subsection{Long-Context Evaluation}
\label{sec:ruler}

The RULER benchmark (Table~\ref{tab:ruler}) probes retrieval and reasoning at increasing context lengths. All methods perform comparably up to 32K, but standard attention collapses beyond this point. GSA maintains strong performance even at 128K, nearly doubling the standard baseline score.

\begin{table}[t]
\centering
\caption{RULER scores by context length (*YaRN-extended).}
\label{tab:ruler}
\begin{tabular}{@{}lcccccc@{}}
\toprule
& \textbf{4K} & \textbf{8K} & \textbf{16K} & \textbf{32K} & \textbf{64K*} & \textbf{128K*} \\
\midrule
Standard & 88.9 & 85.9 & 83.2 & 79.5 & 37.5 & 31.7 \\
Sparse only & 89.1 & 86.5 & 84.0 & 80.2 & 42.4 & 36.8 \\
Gated only & 90.6 & 87.1 & 84.6 & 79.8 & 66.6 & 58.8 \\
\textbf{GSA} & \textbf{91.2} & \textbf{88.5} & \textbf{86.1} & \textbf{82.3} & \textbf{69.5} & \textbf{62.2} \\
\bottomrule
\end{tabular}
\end{table}

\subsection{Attention Sinks and Activations}
\label{sec:sink_results}

Table~\ref{tab:sink} quantifies the attention-sink phenomenon. Standard attention allocates nearly half of its probability mass to the first token; GSA reduces this to under 4\%. Maximum activation magnitudes drop by an order of magnitude, which we attribute to the regularizing effect of sigmoid gating.

\begin{table}[t]
\centering
\caption{Sink and activation statistics.}
\label{tab:sink}
\begin{tabular}{@{}lccc@{}}
\toprule
& \textbf{First-token attn} & \textbf{Mean gate} & \textbf{Max activation} \\
\midrule
Standard & 46.7\% & --- & 1053 \\
Sparse only & 38.2\% & --- & 892 \\
Gated only & 4.8\% & 0.116 & 94 \\
\textbf{GSA} & \textbf{3.9\%} & 0.108 & \textbf{87} \\
\bottomrule
\end{tabular}
\end{table}

\subsection{Training Stability}
\label{sec:stability_results}

Loss spikes plague large-scale training and often force conservative learning-rate schedules. Table~\ref{tab:stability} shows that gating dramatically reduces spike frequency, permitting a $2\times$ higher learning rate without instability. GSA inherits these benefits while retaining the speed advantage of sparsity.

\begin{table}[t]
\centering
\caption{Training stability.}
\label{tab:stability}
\begin{tabular}{@{}lccc@{}}
\toprule
& \textbf{Spikes / 100K steps} & \textbf{Max LR} & \textbf{Wall time} \\
\midrule
Standard & 12.3 & 4e-3 & $1.00\times$ \\
Sparse only & 8.7 & 5e-3 & $0.85\times$ \\
Gated only & 0.8 & 8e-3 & $1.02\times$ \\
\textbf{GSA} & \textbf{0.3} & 8e-3 & $0.87\times$ \\
\bottomrule
\end{tabular}
\end{table}

\subsection{Throughput}
\label{sec:efficiency}

Table~\ref{tab:efficiency} reports latency and memory at 128K context. Prefill cost drops by roughly $11\times$; decode improves similarly. Memory overhead from gating parameters is negligible.

\begin{table}[ht!]
\centering
\caption{Efficiency at 128K context (relative to standard).}
\label{tab:efficiency}
\begin{tabular}{@{}lccc@{}}
\toprule
& \textbf{Prefill} & \textbf{Decode} & \textbf{Memory} \\
\midrule
Standard & $1.00\times$ & $1.00\times$ & $1.00\times$ \\
Sparse only & $0.08\times$ & $0.12\times$ & $0.95\times$ \\
Gated only & $1.02\times$ & $1.01\times$ & $1.02\times$ \\
\textbf{GSA} & $0.09\times$ & $0.13\times$ & $0.97\times$ \\
\bottomrule
\end{tabular}
\end{table}

\subsection{Ablations}
\label{sec:ablation}

\paragraph{Gating position.} Table~\ref{tab:ablation_gate} isolates the contribution of each gate. Output gating (G1) accounts for most of the quality gain; value gating (G2) adds a smaller but consistent improvement. Combining both yields the best perplexity and downstream accuracy.

\begin{table}[ht!]
\centering
\caption{Ablation: gating position.}
\label{tab:ablation_gate}
\begin{tabular}{@{}lccc@{}}
\toprule
& \textbf{PPL} & \textbf{MMLU} & \textbf{Stability} \\
\midrule
No gating & 6.02 & 59.1 & Moderate \\
G2 only & 5.82 & 59.2 & Good \\
G1 only & 5.79 & 60.1 & Good \\
\textbf{G1 + G2} & \textbf{5.70} & \textbf{61.4} & Excellent \\
\bottomrule
\end{tabular}
\end{table}

\paragraph{Selection budget.} Table~\ref{tab:ablation_k} varies the base sparsity level $k$. Smaller budgets yield greater speedups at some cost in quality; $k = 2048$ strikes a favorable balance.

\begin{table}[ht!]
\centering
\caption{Ablation study on sparsity level $k$.}
\label{tab:ablation_k}
\begin{tabular}{@{}lccc@{}}
\toprule
$\mathbf{k}$ & \textbf{PPL} & \textbf{RULER-128K} & \textbf{Speedup} \\
\midrule
512 & 5.89 & 54.32 & $22\times$ \\
1024 & 5.78 & 58.91 & $16\times$ \\
\textbf{2048} & \textbf{5.70} & \textbf{62.18} & $\mathbf{12\times}$ \\
4096 & 5.69 & 63.45 & $8\times$ \\
\bottomrule
\end{tabular}
\end{table}

\section{Limitations}
\label{sec:limitations}

Several limitations deserve mention. First, for short sequences (below roughly 4K tokens) the indexer's overhead can exceed the savings from sparse attention; a practical deployment would switch to dense attention below this threshold. Second, while the indexer's constants are small, its $O(L^2)$ scaling will eventually dominate at extreme lengths (beyond 1M tokens), motivating future work on hierarchical or sub-linear indexing schemes. Third, the two-phase training schedule adds implementation complexity, though the warmup phase accounts for only a fraction of a percent of total training compute. Finally, GSA introduces new hyperparameters---$k_{\mathrm{base}}$, $d^I$, and gate initialization---that may require tuning when adapting to new domains or model scales.

\section{Conclusion}
\label{sec:conclusion}

We have introduced Gated Sparse Attention, an architecture that marries the efficiency of sparse token selection with the stability and expressiveness of sigmoid gating. Theoretical analysis confirms that GSA retains standard convergence guarantees while offering strictly greater representational capacity than ungated attention. Empirically, the combination delivers on its promise: throughput improves by an order of magnitude at 128K context, perplexity and downstream accuracy match or exceed gated-only baselines, long-context retrieval nearly doubles relative to standard attention, and training stability improves markedly---all without sacrificing wall-clock speed.

Looking ahead, natural extensions include hierarchical indexers that achieve sub-quadratic scoring, integration with mixture-of-experts routing, and tighter theoretical bounds on the information lost through sparse selection. We hope GSA provides a useful foundation for scaling attention to ever-longer contexts without abandoning the quality gains that gating affords.


\bibliographystyle{plainnat}

\appendix

\section{Proof of Theorem~\ref{thm:complexity} (GSA Complexity)}
\label{app:complexity}

\begin{proof}
Let $L$ be sequence length, $d$ be model dimension, $n_h$ be number of attention heads, $d_k = d/n_h$ be head dimension.

\textbf{Standard Attention Complexity:}
\begin{equation}
C_{\mathrm{standard}} = O(L^2 \cdot d_k \cdot n_h) = O(L^2 d)
\end{equation}

\textbf{GSA Complexity Breakdown:}

\begin{enumerate}
    \item \textbf{QKV Projections:} $O(L \cdot d \cdot d) = O(Ld^2)$ (same as standard)

    \item \textbf{Gated Indexer Computation:}
    For each query token $t$, compute scores with all $L$ preceding tokens.
    Per-token cost: $O(L \cdot d^I \cdot H^I)$.
    Total: $O(L^2 \cdot d^I \cdot H^I)$.

    Since $d^I \ll d$ and $H^I \ll n_h$:
    \begin{equation}
    \frac{d^I \cdot H^I}{d \cdot n_h} = \frac{64 \cdot 4}{4096 \cdot 32} \approx 0.002
    \end{equation}

    \item \textbf{Top-k Selection:} $O(L \cdot k \log L)$ using heap-based selection.

    \item \textbf{Sparse Attention:}
    For each query, attend to $k$ selected tokens.
    Per-token cost: $O(k \cdot d_k \cdot n_h) = O(k \cdot d)$.
    Total: $O(L \cdot k \cdot d)$.

    \item \textbf{Gating Operations:} $O(L \cdot d)$ each.
\end{enumerate}

\textbf{Total GSA Complexity:}
\begin{equation}
C_{\mathrm{GSA}} = O(Ld^2) + O(L^2 \cdot d^I \cdot H^I) + O(Lkd) + O(Ld) \approx O(L^2 \cdot d^I \cdot H^I + Lkd)
\end{equation}

\textbf{Speedup Factor:}
For $d = 4096$, $d^I = 64$, $H^I = 4$, $k = 2048$, $L = 128000$:
\begin{equation}
\mathrm{Speedup} \approx \frac{4096}{256} \cdot \frac{1}{1 + \frac{2048 \cdot 4096}{128000 \cdot 256}} = 16 \cdot \frac{1}{1.25} = 12.8\times
\end{equation}
\end{proof}

\section{Proof of Theorem~\ref{thm:expressiveness} (Non-Linearity Enhancement)}
\label{app:expressiveness}

\begin{proof}
\textbf{Standard Attention Output} (for head $h$, token $i$):
\begin{equation}
o^h_i = \sum_{j \leq i} S^h_{ij} \cdot (X_j W^h_V) \cdot W^h_O = \sum_{j \leq i} S^h_{ij} \cdot X_j \cdot (W^h_V W^h_O)
\end{equation}

Let $W^{VO}_h = W^h_V W^h_O \in \R^{d \times d}$. Since $W^h_V \in \R^{d \times d_k}$ and $W^h_O \in \R^{d_k \times d}$ with $d_k < d$:
\begin{equation}
\mathrm{rank}(W^{VO}_h) \leq \min(d_k, d_k) = d_k < d
\end{equation}

Thus, standard attention is constrained to a \textbf{rank-$d_k$ linear mapping} of input tokens.

\textbf{GSA Output} (with G1 and G2 gating):
Let $g^V_j = \sigma(X_j W^g_V) \in \R^{d_k}$ and $g^O_i = \sigma(h_i W^g_O) \in \R^{d_k}$.
\begin{equation}
o^h_i = \left[\sum_{j \leq i} S^h_{ij} \cdot (X_j W^h_V \odot g^V_j)\right] \odot g^O_i \cdot W^h_O
\end{equation}

The gates $g^V_j$ and $g^O_i$ are \textbf{non-linear functions} of their inputs. By the universal approximation theorem \citep{hornik1989multilayer}, compositions of affine transformations with sigmoid non-linearities can approximate any continuous function on a compact domain.
\end{proof}

\section{Proof of Theorem~\ref{thm:attention_sink} (Attention Sink Reduction)}
\label{app:sink}

\begin{proof}
\textbf{Attention Sink Phenomenon:}
In standard attention, due to softmax normalization: $\sum_s A^h_{t,s} = 1$ for all $t, h$. When no token is particularly relevant, the model needs somewhere to allocate attention mass. Due to positional biases, attention concentrates on early tokens, particularly position 1.

\textbf{GSA Mechanism:}
With output gating $g^O_{t,h} = \sigma(h_t W^g_{O,h})$:
\begin{equation}
O^{\mathrm{gated}}_{t,h} = O^{\mathrm{sparse}}_{t,h} \odot g^O_{t,h} = \left(\sum_s A^h_{t,s} V^h_s\right) \odot g^O_{t,h}
\end{equation}

\textbf{Key Insight:} The gate $g^O_{t,h}$ provides an \textbf{alternative mechanism for ``doing nothing''}. When $g^O_{t,h} \approx 0$, the output is suppressed regardless of attention distribution. This removes the need for attention sink tokens.

\textbf{Empirical Evidence:} First token attention: 46.7\% (standard) $\rightarrow$ 3.9\% (GSA). Gate score mean: 0.108 (highly sparse).
\end{proof}

\section{Proof of Theorem~\ref{thm:convergence} (Convergence)}
\label{app:convergence}

\begin{proof}[Proof Sketch]
We verify that GSA satisfies conditions for SGD convergence on non-convex functions.

\textbf{Assumption 1 (L-Smoothness):}
$\|\nabla \mathcal{L}(\theta_1) - \nabla \mathcal{L}(\theta_2)\| \leq L\|\theta_1 - \theta_2\|$.
This holds for GSA since all operations (linear projections, softmax, sigmoid) have bounded second derivatives.

\textbf{Assumption 2 (Bounded Variance):}
$\E[\|\nabla \mathcal{L}_{\mathrm{batch}} - \nabla \mathcal{L}\|^2] \leq \sigma^2$.

\textbf{Assumption 3 (Bounded Gradients):}
$\|\nabla \mathcal{L}\| \leq G$. From Theorem~\ref{thm:gradient}, sigmoid gating bounds gradient magnitudes.

\textbf{Convergence Rate:}
Using standard SGD analysis with learning rate $\eta$:
\begin{equation}
\E[\mathcal{L}(\theta_T)] - \mathcal{L}(\theta^*) \leq \frac{\|\theta_0 - \theta^*\|^2}{2\eta T} + \frac{\eta L \sigma^2}{2}
\end{equation}

Setting $\eta = \sqrt{\frac{\|\theta_0 - \theta^*\|^2}{T L \sigma^2}}$:
\begin{equation}
\E[\mathcal{L}(\theta_T)] - \mathcal{L}(\theta^*) \leq \sqrt{\frac{L \sigma^2 \|\theta_0 - \theta^*\|^2}{T}} = O(1/\sqrt{T})
\end{equation}
\end{proof}

\end{document}